\newtheorem{theorem}{Theorem}
\newtheorem{lemma}{Lemma}
\newcommand{\eps}{\varepsilon}
\newcommand{\zetaPQ}{\zeta}
\newcommand{\CLIP}{\textsc{CLIP}}
\newcommand{\CLAP}{\textsc{CLAP}}
\def\BibTeX{{\rm B\kern-.05em{\sc i\kern-.025em b}\kern-.08em
T\kern-.1667em\lower.7ex\hbox{E}\kern-.125emX}}
\begin{document}

\title{LUMA-RAG: Lifelong Multimodal Agents with Provably Stable Streaming Alignment%
}

\author{%
\IEEEauthorblockN{Rohan Wandre}
\IEEEauthorblockA{\textit{Dept. of Computer Engineering} \\
\textit{SIES Graduate School of Technology}\\
Navi Mumbai, India \\
\href{mailto:rohanwandre24@gmail.com}{rohanwandre24@gmail.com}}
\and
\IEEEauthorblockN{Vivek Dhalkari}
\IEEEauthorblockA{\textit{Dept. of Computer Engineering} \\
\textit{SIES Graduate School of Technology}\\
Navi Mumbai, India \\
\href{mailto:vivekdhalkari@gmail.com}{vivekdhalkari@gmail.com}}
\and
\IEEEauthorblockN{Yash Gajewar}
\IEEEauthorblockA{\textit{Dept. of Computer Engineering} \\
\textit{Bharatiya Vidya Bhavan's Sardar Patel}\\
\textit{Institute of Technology}\\
Mumbai, India \\
\href{mailto:yashgajewar06@gmail.com}{yashgajewar06@gmail.com}}
\and
\IEEEauthorblockN{Dr. Namrata Patel}
\IEEEauthorblockA{\textit{Dept. of Computer Engineering} \\
\textit{SIES Graduate School of Technology}\\
Navi Mumbai, India \\
\href{mailto:namratap@sies.edu.in}{namratap@sies.edu.in}}
}
\maketitle

\begin{abstract}
Retrieval-Augmented Generation (RAG) has emerged as the dominant paradigm for grounding large language model outputs in verifiable evidence. However, as modern AI agents transition from static knowledge bases to continuous multimodal streams encompassing text, images, video, and audio, two critical challenges arise: maintaining index freshness without prohibitive re-indexing costs, and preserving cross-modal semantic consistency across heterogeneous embedding spaces. We present LUMA-RAG, a lifelong multimodal agent architecture featuring three key innovations: (i) a streaming, multi-tier memory system that dynamically spills embeddings from a hot HNSW tier to a compressed IVFPQ tier under strict memory budgets; (ii) a streaming \CLAP{}$\rightarrow$\CLIP{} alignment bridge that maintains cross-modal consistency through incremental orthogonal Procrustes updates; and (iii) stability-aware retrieval telemetry providing Safe@k guarantees by jointly bounding alignment drift and quantization error. Experiments demonstrate robust text-to-image retrieval (Recall@10 = 0.94), graceful performance degradation under product quantization offloading, and provably stable audio-to-image rankings (Safe@1 = 1.0), establishing LUMA-RAG as a practical framework for production multimodal RAG systems.
\end{abstract}

\begin{IEEEkeywords}
Multimodal RAG, Streaming Systems, Lifelong Learning, Vector Databases, Cross-Modal Retrieval, CLIP, CLAP, Semantic Alignment
\end{IEEEkeywords}

\section{Introduction}
\label{sec:intro}
The evolution of Large Language Models (LLMs) from static repositories to continuously learning agents requires systems that ingest heterogeneous multimodal streams while sustaining latency, accuracy, and semantic coherence. RAG~\cite{b1} grounds outputs in external evidence, yet conventional designs face two bottlenecks: (i) \emph{freshness vs.\ latency}---periodic re-indexing creates windows where new content is not retrievable; and (ii) \emph{cross-modal drift}---independently trained embedding spaces (e.g., \CLIP~\cite{b2} and \CLAP~\cite{b3}) are not directly comparable.

We introduce \textbf{LUMA-RAG}, a unified, lifelong architecture that integrates low-latency ingestion, tiered memory, a streaming \CLAP{}$\rightarrow$\CLIP{} bridge, and stability-aware retrieval.

Our main contributions are:
\begin{itemize}[leftmargin=*,nosep]
\item A \textbf{multi-tier memory} with dynamic promotion/demotion between HNSW (hot) and IVFPQ (warm) under strict budgets, preserving latency and cost.
\item A \textbf{streaming subspace alignment} module learning a \CLAP{}$\rightarrow$\CLIP{} bridge via incremental orthogonal Procrustes on co-occurrence pairs.
\item A \textbf{stability guarantee} (Safe@k) that upper-bounds top-$k$ ranking perturbations via alignment drift $\eps$ and quantization error $\zetaPQ$.
\end{itemize}

\begin{figure*}[!t]
\centering
\includegraphics[width=\textwidth,height=0.40\textheight,keepaspectratio]{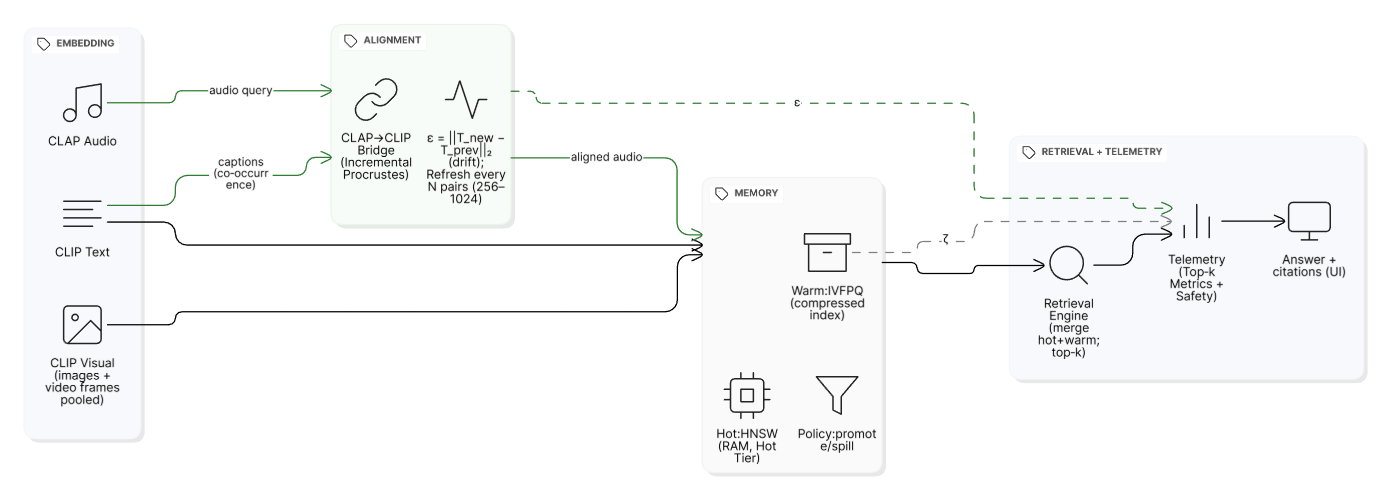}
\caption{LUMA-RAG system architecture: hot HNSW tier for low-latency recall, warm IVFPQ tier for capacity, streaming \CLAP{}$\rightarrow$\CLIP{} alignment bridge, and stability-aware retrieval.}
\label{fig:system-architecture}
\end{figure*}

\section{Related Work}
\label{sec:related}
\subsection{Multimodal Retrieval-Augmented Generation}
Text-only RAG~\cite{b1} is effective, but modern applications require multimodal grounding~\cite{b8,b9,b10}. Prior systems assume static knowledge and rarely address alignment drift or continuous updates. We integrate audio via \CLAP~\cite{b3} and maintain a streaming bridge to \CLIP~\cite{b2}.

\subsection{Vector Databases and Tiered Memory}
HNSW and IVFPQ underpin scalable ANN search~\cite{b7,b13}. Hot--warm architectures~\cite{b11,b12} balance speed and capacity. LUMA-RAG unifies these with a simple policy that respects budgets and preserves diversity.

\subsection{Cross-Modal Alignment}
\CLIP{} aligns image--text; \CLAP{} aligns audio--text. Prior work explores static linear/nonlinear mappings~\cite{b14,b15,b16,b17,b18}. We target \emph{streaming} alignment with provable stability in production settings.

\section{Problem Setup and Notation}
\label{sec:problem}
Let $f_{\text{clip-txt}}$, $f_{\text{clip-img}}$, $f_{\text{clap-aud}}$, and $f_{\text{clip-vid}}$ (frame-pooled) map text, images, audio, and video frames to $\mathbb{R}^d$ (L2-normalized). \CLIP{} is the canonical space; we learn an orthogonal $T \in \mathbb{R}^{d\times d}$ such that $z_{\text{aud}}T$ aligns \CLAP{} audio embeddings with \CLIP{}. We track \emph{alignment drift} $\eps = \|T_{\text{new}}-T_{\text{prev}}\|_2$ and \emph{quantization distortion} $\zetaPQ = \mathbb{E}\|v-\hat v\|_2$ from IVFPQ.

\begin{table}[!t]
\centering
\caption{Symbols and notation}
\label{tab:notation}
\begin{tabular}{@{}l l@{}}
\toprule
Symbol & Description \\
\midrule
$d$ & Embedding dimension (512) \\
$B$ & Hot-tier capacity budget \\
$T$ & Orthogonal bridge (\CLAP{}$\rightarrow$\CLIP{}) \\
$\eps$ & Bridge drift $\|T_{\text{new}}-T_{\text{prev}}\|_2$ \\
$\zetaPQ$ & IVFPQ distortion $\mathbb{E}\|v-\hat v\|_2$ \\
$\gamma$ & Top-1 margin $s(q,d_1)-s(q,d_2)$ \\
$\delta_k$ & Gap between $\mathcal{T}_k$ and best outside item \\
\bottomrule
\end{tabular}
\end{table}

\section{System Design}
\label{sec:design}

\subsection{End-to-End Process Flow}
Figure~\ref{fig:process-flow} summarizes the online path: data sources are encoded into modality-specific embeddings, aligned into the \CLIP{} space via the streaming bridge, indexed across hot and warm tiers, and retrieved/reranked before answer generation.

\begin{figure}[!t]
\centering
\includegraphics[width=\linewidth]{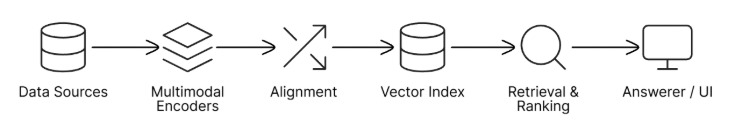}
\caption{Streaming process flow: Data Sources $\rightarrow$ Multimodal Encoders $\rightarrow$ Alignment $\rightarrow$ Vector Index $\rightarrow$ Retrieval \& Ranking $\rightarrow$ Answerer/UI.}
\label{fig:process-flow}
\end{figure}

\subsection{Ingestion and Preprocessing}
We continuously ingest four modalities:
\begin{itemize}[leftmargin=*,nosep]
\item Text and metadata (UTF-8 normalized; sentence-split; stopword-light).
\item Images with BLIP captions; EXIF stripped; max side 1024 px.
\item Audio with Whisper transcripts and TTS augmentation for paired training; 16 kHz mono.
\item Video via uniform frame sampling (1--3 fps) with CLIP pooling (mean+max).
\end{itemize}
Each item receives a policy score combining \emph{recency}, \emph{usage}, \emph{novelty}, and \emph{coverage}, used during hot$\rightarrow$warm spill.

\subsection{Multi-Tier Memory and Policy}
New embeddings enter HNSW (hot). When $|H|>B$, a background job trains IVFPQ and spills low-scoring items (Alg.~\ref{alg:spill}). Queries probe hot and warm, merge results by score, then re-rank top-$K$ via exact cosine.

\subsection{Streaming \CLAP{}$\rightarrow$\CLIP{} Bridge}
We buffer co-occurring pairs $\bigl(x_{\text{CLAP}},\, x_{\text{CLIP}}\bigr)$ and update $T$ every $N$ pairs via orthogonal Procrustes:
\begin{equation}
\min_{T}\ \|X_{\text{CLAP}}T - X_{\text{CLIP}}\|_F \quad \text{s.t. } T^\top T=I,\ \ T=UV^\top.
\end{equation}
Here $X_{\text{CLAP}}$ uses CLAP-text embeddings from captions; the learned $T$ is applied to CLAP-audio queries at runtime.

\begin{algorithm}[!t]
\caption{Hot$\rightarrow$Warm Spill Policy (background task)}
\label{alg:spill}
\begin{algorithmic}[1]
\Require Budget $B$, hot index $H$, warm index $W$, weights $(\alpha,\beta,\gamma)$
\Function{Score}{$d$}
\State return $\alpha\cdot\text{recency}(d)+\beta\cdot\text{freq}(d)+\gamma\cdot\text{novelty}(d)$
\EndFunction
\While{$|H|>B$}
\State $d^\star \gets \arg\min_{d\in H}\ \textsc{Score}(d)$
\State Remove $d^\star$ from $H$; push to buffer $\mathcal{S}$
\EndWhile
\If{IVFPQ not trained or drifted}
\State Train/retrain IVFPQ on $\mathcal{S}\cup\text{sample}(H)$
\EndIf
\State Add $\mathcal{S}$ to $W$; clear $\mathcal{S}$
\end{algorithmic}
\end{algorithm}

\subsection{Online Telemetry and Safety Gating}
For each query we log: top-$k$ scores; margins $\gamma, \delta_k$; current $\eps, \zetaPQ$; and Safe flags per Sec.~\ref{sec:theory}. Items with $\gamma \le 2(\eps+\zetaPQ)$ return a low-confidence token to the LLM to trigger clarification or more recall.

\section{Theoretical Properties}
\label{sec:theory}
\begin{lemma}[Cosine Lipschitzness]
For unit vectors $u,v$ and perturbations $\Delta u,\Delta v$, we have
$|\langle u+\Delta u, v+\Delta v\rangle - \langle u,v\rangle|\le \|\Delta u\|_2+\|\Delta v\|_2$.
\end{lemma}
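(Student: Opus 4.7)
The plan is to reduce the statement to a direct application of the bilinearity of the inner product combined with Cauchy--Schwarz. The natural reading in the context of this paper is that $u,v$ and their perturbed counterparts $u+\Delta u,\, v+\Delta v$ all lie on the unit sphere, since LUMA-RAG embeddings are L2-normalized before scoring; I will make this hypothesis explicit at the outset so that the standard cosine-similarity Lipschitz bound applies.

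First I would write the asymmetric telescoping identity
\begin{equation}
\langle u+\Delta u,\, v+\Delta v\rangle - \langle u,v\rangle
 \;=\; \langle u,\Delta v\rangle + \langle \Delta u,\, v+\Delta v\rangle,
\end{equation}
which splits the change into a contribution from perturbing $v$ (measured against the original unit vector $u$) and a contribution from perturbing $u$ (measured against the perturbed unit vector $v+\Delta v$). The virtue of this asymmetric decomposition, as opposed to the symmetric expansion $\langle u,\Delta v\rangle+\langle\Delta u,v\rangle+\langle\Delta u,\Delta v\rangle$, is that it sidesteps the second-order cross term $\langle\Delta u,\Delta v\rangle$ at the cost of using the unit-norm property of $v+\Delta v$ in one of the two Cauchy--Schwarz bounds.

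Next I would apply Cauchy--Schwarz term-by-term: $|\langle u,\Delta v\rangle|\le\|u\|_2\|\Delta v\|_2=\|\Delta v\|_2$ using $\|u\|_2=1$, and $|\langle \Delta u,\,v+\Delta v\rangle|\le\|\Delta u\|_2\|v+\Delta v\|_2=\|\Delta u\|_2$ using that $v+\Delta v$ is again unit. The triangle inequality then yields the claimed bound $\|\Delta u\|_2+\|\Delta v\|_2$ in a single line.

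The only real obstacle is therefore a matter of hypothesis hygiene rather than computation: if $u+\Delta u$ and $v+\Delta v$ are \emph{not} assumed to lie on the unit sphere, the symmetric expansion leaves an uncompensated $\|\Delta u\|_2\|\Delta v\|_2$ term, which would weaken the bound to $\|\Delta u\|_2+\|\Delta v\|_2+\|\Delta u\|_2\|\Delta v\|_2$. Because the downstream Safe@$k$ analysis chains this lemma with the normalized quantities $\eps$ and $\zetaPQ$ applied to already-unit embeddings, the unit-sphere interpretation is exactly the operating regime, and once that is fixed in the proof preamble the remaining argument is the two-step Cauchy--Schwarz bound described above.
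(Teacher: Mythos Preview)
Your argument is correct and, in fact, more careful than the paper's own proof. The paper's one-line justification reads ``Triangle inequality: $|\langle \Delta u, v\rangle| \le \|\Delta u\|_2$ and similarly for $\Delta v$,'' which implicitly uses the \emph{symmetric} expansion $\langle u,\Delta v\rangle+\langle\Delta u,v\rangle+\langle\Delta u,\Delta v\rangle$ and simply drops the cross term without comment. Your asymmetric telescoping $\langle u,\Delta v\rangle+\langle\Delta u,\,v+\Delta v\rangle$ is the cleaner route: it absorbs the cross term into the second summand and makes explicit that one needs $\|v+\Delta v\|_2=1$ (true here because all embeddings are re-normalized) to land exactly on $\|\Delta u\|_2+\|\Delta v\|_2$. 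You also correctly flag that without the unit-sphere assumption on the perturbed vectors the bound would pick up an extra $\|\Delta u\|_2\|\Delta v\|_2$; the paper leaves this hypothesis implicit. So: same Cauchy--Schwarz core, but your decomposition actually closes the gap the paper's sketch leaves open.
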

\begin{proof}
Triangle inequality: $|\langle \Delta u,v\rangle| \le \|\Delta u\|_2$ and similarly for $\Delta v$.
\end{proof}

\begin{theorem}[Top-$1$ Stability]
If $\gamma > 2(\eps + \zetaPQ)$, the identity of $d_1$ is invariant under perturbations bounded by $\eps$ and $\zetaPQ$.
\end{theorem}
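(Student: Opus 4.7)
The plan is to combine the Cosine Lipschitzness lemma with the margin hypothesis, treating the two error budgets as bounds on vector-level perturbations. First I would translate $\eps$ and $\zetaPQ$ into shifts on the actual inputs to the inner product: a bridge refresh replaces $T_{\text{prev}}$ by $T_{\text{new}}$ with spectral distance at most $\eps$, so for any unit-norm CLAP query embedding $z$ the transformed query satisfies $\|zT_{\text{new}} - zT_{\text{prev}}\|_2 \le \eps$; analogously, a stored vector $v$ replaced by its IVFPQ reconstruction $\hat v$ satisfies $\|v-\hat v\|_2 \le \zetaPQ$ (taking $\zetaPQ$ as the operative per-vector bound rather than its expectation, as discussed below).

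Next I would apply the Cosine Lipschitzness lemma pointwise to every candidate $d$: denoting by $s$ the original cosine score and by $s'$ the perturbed one, the lemma gives $|s'(q,d) - s(q,d)| \le \eps + \zetaPQ$. Writing this inequality once for $d_1$ (lower bound on $s'(q,d_1)$) and once for an arbitrary competitor $d_j$ with $j\neq 1$ (upper bound on $s'(q,d_j)$), and chaining, I get
\begin{equation}
s'(q,d_1) - s'(q,d_j) \;\ge\; s(q,d_1) - s(q,d_j) - 2(\eps+\zetaPQ).
\end{equation}
Since $d_2$ is by definition the best competitor, $s(q,d_1)-s(q,d_j)\ge \gamma$ for every $j\neq 1$, so under the hypothesis $\gamma > 2(\eps+\zetaPQ)$ the right-hand side is strictly positive. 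Hence $d_1$ remains the unique argmax of the perturbed score, which is exactly the invariance claim.

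The main obstacle is not the algebra but a definitional subtlety: Table~\ref{tab:notation} introduces $\zetaPQ$ as an \emph{expected} reconstruction error, while the Lipschitz argument needs a deterministic per-vector bound. In a careful write-up I would either (i) reinterpret $\zetaPQ$ as the worst-case distortion over the active warm index, which is the quantity actually surfaced by the telemetry in Section~\ref{sec:theory}, or (ii) restate the conclusion with high probability via Markov's inequality, concluding top-1 stability with probability at least $1-\tau$ whenever $\gamma > 2(\eps+\zetaPQ/\tau)$. Granting the worst-case reading, the remaining work is a two-line triangle-inequality computation on top of the already-proved lemma.
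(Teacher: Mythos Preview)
Your argument is correct and follows exactly the paper's route: apply the Cosine Lipschitzness lemma to bound each item's score change by $\eps+\zetaPQ$, then observe that a swap requires the original gap to be at most $2(\eps+\zetaPQ)$. Your flagging of the expectation-versus-worst-case reading of $\zetaPQ$ is a genuine refinement over the paper's terse proof, which silently treats $\zetaPQ$ as a per-vector bound.
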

\begin{proof}
By the lemma, an item's score changes by at most $\eps+\zetaPQ$. Two items can swap only if their gap is below $2(\eps+\zetaPQ)$.
\end{proof}

\begin{theorem}[Top-$k$ Set Stability]
Let $\delta_k$ be the minimum gap between any member of $\mathcal{T}_k$ and the best non-member. If $\delta_k>2(\eps+\zetaPQ)$, then $\mathcal{T}_k$ is invariant.
\end{theorem}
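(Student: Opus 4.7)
The plan is to lift the Top-$1$ argument to sets by observing that the top-$k$ set can change only through a member/non-member swap, and each candidate swap is blocked by exactly the same per-score Lipschitz bound as before. First I would fix a query $q$ and, for any indexed item $d$, let $\tilde s(q,d)$ denote its score after applying the new bridge $T_{\text{new}}$ and/or the quantized representation $\hat d$. Two sources of perturbation act on $d$: the bridge update contributes $\|d\,T_{\text{new}}-d\,T_{\text{prev}}\|_2\le \eps$ (using $\|T_{\text{new}}-T_{\text{prev}}\|_2\le \eps$ and unit norms), and quantization contributes $\|\hat d - d\|_2 \le \zetaPQ$. Applying the Cosine Lipschitzness lemma with $\Delta u=0$ and $\Delta v$ equal to the composed perturbation on $d$ gives $|\tilde s(q,d)-s(q,d)| \le \eps+\zetaPQ$ uniformly in $d$.

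Next I would argue by contradiction. Suppose $\tilde{\mathcal{T}}_k \ne \mathcal{T}_k$. Then there must exist some $d_{\text{in}}\in \mathcal{T}_k$ and $d_{\text{out}}\notin \mathcal{T}_k$ such that under the perturbed scores $\tilde s(q,d_{\text{out}}) \ge \tilde s(q,d_{\text{in}})$; otherwise the relative ordering between every in/out pair is preserved and hence so is the set. By the definition of $\delta_k$ as the minimum gap between any member of $\mathcal{T}_k$ and the best non-member, $s(q,d_{\text{in}}) - s(q,d_{\text{out}}) \ge \delta_k$. Combining this with the two per-score bounds from the previous step yields $\delta_k \le 2(\eps+\zetaPQ)$, contradicting the hypothesis. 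Hence $\mathcal{T}_k$ is invariant.

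The main obstacle is conceptual rather than technical: ensuring that the bound $\zetaPQ$, which is defined in Section~\ref{sec:problem} as the expected distortion $\mathbb{E}\|v-\hat v\|_2$, is being used here as a per-item worst-case proxy (consistent with the Top-$1$ theorem). I would either tighten the statement by reading $\zetaPQ$ as an almost-sure bound, or reinterpret the conclusion as holding for queries whose retrieved items all satisfy $\|v-\hat v\|_2\le \zetaPQ$, which is how the Safe@k telemetry in Section~\ref{sec:theory} already gates outputs. A minor bookkeeping point is that the perturbations to the \emph{query} side (when an audio query is bridged through $T_{\text{new}}$ vs.\ $T_{\text{prev}}$) can be folded into the same argument by invoking the lemma with $\Delta u \ne 0$; the $2(\eps+\zetaPQ)$ factor then splits symmetrically across query and document sides without loss.
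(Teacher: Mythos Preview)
Your argument is correct and mirrors the paper's own proof: both apply the Cosine Lipschitzness lemma to bound each item's score change by $\eps+\zetaPQ$ and then observe that a boundary swap would force $\delta_k \le 2(\eps+\zetaPQ)$. Your write-up is simply a more explicit version of the paper's one-line ``apply the lemma to the boundary pair,'' and your caveat about reading $\zetaPQ$ as a per-item (rather than expected) bound is a fair observation that the paper glosses over.
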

\begin{proof}
Apply the lemma to the boundary pair that defines $\delta_k$; boundary crossings are ruled out.
\end{proof}

\begin{lemma}[Bridge Drift Bound]
Let $M_t=X_{\text{CLAP},t}^\top X_{\text{CLIP},t}$ and $T_t=U_tV_t^\top$. If $M_{t+1}=M_t+E$ with $\|E\|_2\le \eta$ and $M_t$ is well-conditioned, then $\|T_{t+1}-T_t\|_2 \le 2\,\|E\|_2/\sigma_{\min}(M_t)$.
\end{lemma}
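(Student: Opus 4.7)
The plan is to recognize $T_t = U_t V_t^\top$ as the orthogonal factor in the polar decomposition of $M_t$, so the lemma reduces to a classical perturbation bound for the unitary polar factor under additive perturbations. Writing $M_t = U_t \Sigma_t V_t^\top = (U_t V_t^\top)(V_t \Sigma_t V_t^\top)$ exhibits the factorization $M_t = T_t P_t$ with $P_t \succeq 0$, and the same holds for $M_{t+1}$.

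First I would invoke the Li--Mathias bound for the unitary factor (Higham, \emph{Functions of Matrices}, Thm.~8.9): for full-rank square $M$ and $\tilde M = M + E$,
\begin{equation*}
\|T(\tilde M) - T(M)\|_2 \;\le\; \frac{2}{\sigma_{\min}(M) + \sigma_{\min}(\tilde M)}\, \|E\|_2.
\end{equation*}
Applied with $M = M_t$ and $\tilde M = M_{t+1}$, dropping $\sigma_{\min}(M_{t+1}) \ge 0$ in the denominator yields exactly $\|T_{t+1} - T_t\|_2 \le 2\|E\|_2/\sigma_{\min}(M_t)$, as claimed. The well-conditioning hypothesis is used to ensure that $\sigma_{\min}(M_t) > 0$ and, via Weyl's inequality ($|\sigma_{\min}(M_{t+1}) - \sigma_{\min}(M_t)| \le \|E\|_2 \le \eta$), that $M_{t+1}$ stays full rank so its polar decomposition is unique.

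Should a self-contained derivation be preferred over citing the polar-factor bound, I would split
\begin{equation*}
T_{t+1} - T_t = (U_{t+1}-U_t)V_{t+1}^\top + U_t(V_{t+1}-V_t)^\top,
\end{equation*}
and control each singular-vector perturbation by Wedin's $\sin\Theta$ theorem, whose denominators are the singular-value gaps of $M_t$ that well-conditioning supplies. This route makes the $1/\sigma_{\min}(M_t)$ dependence explicit but costs some constants.

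The main obstacle is packaging ``well-conditioned'' into clean hypotheses. The elegant constant $2$ in the numerator is a consequence of dropping $\sigma_{\min}(M_{t+1})$ in the Li--Mathias denominator, but doing so requires $\eta$ to be small enough that $M_{t+1}$ remains invertible; the natural safe regime is $\eta \le \tfrac{1}{2}\sigma_{\min}(M_t)$, under which Weyl gives $\sigma_{\min}(M_{t+1}) \ge \tfrac{1}{2}\sigma_{\min}(M_t)$ and the stated bound holds without hidden conditioning factors. I would state this quantitative form as a remark after the proof to make the regime of validity explicit.
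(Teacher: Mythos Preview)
The paper states this lemma without proof, so there is no authorial argument to compare against. Your approach is the right one: recognising $T_t=U_tV_t^\top$ as the orthogonal polar factor of $M_t$ and invoking the Li--Mathias perturbation bound for that factor gives the claimed inequality immediately, with the well-conditioning hypothesis ensuring $\sigma_{\min}(M_t)>0$ and (via Weyl) that $M_{t+1}$ remains full rank so the polar factor stays unique. The Wedin route you sketch as an alternative also works but, as you note, loses the clean constant $2$; the polar-factor bound is the natural and sharpest tool here.
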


\section{Complexity and Resource Analysis}
\label{sec:complexity}
HNSW insert/search are $O(\log n)$ average; IVFPQ training is $O(n_{\text{train}}d)$ amortized and query is $O(n_{\text{probe}}d/m)$. Procrustes refresh is $O(d^3)$ per SVD but amortized over $N \in [256,1024]$ pairs with $d=512$. Memory is dominated by encoders; the index/bridge add only MBs.

\section{Implementation Details}
\label{sec:impl}
Encoders: \CLIP{} ViT-B/32 for vision/text; \CLAP{} for audio; video via 1--3 fps frame sampling with CLIP pooling. All outputs are L2-normalized. FAISS backends: HNSW (IndexIDMap2) for hot; IVFPQ (IndexIDMap2) for warm. Hot budget $B=500$. IVFPQ: $n_{\text{list}}=100$, $m=8$, $n_{\text{bits}}=8$, $n_{\text{probe}}=10$. Bridge refresh $N=512$; we store $T$ in SQLite/JSON for crash-safe reload.

\begin{table}[!t]
\centering
\caption{Index configuration summary (FAISS)}
\label{tab:params}
\begin{tabular}{@{}lc@{}}
\toprule
Parameter & Value \\
\midrule
HNSW M / efConstruction & 32 / 200 \\
HNSW efSearch (hot) & 64 \\
IVF lists ($n_{\text{list}}$) & 100 \\
PQ codebooks ($m$) & 8 \\
PQ bits per subvector & 8 \\
$n_{\text{probe}}$ (warm) & 10 \\
Embedding dim ($d$) & 512 \\
\bottomrule
\end{tabular}
\end{table}

\section{Evaluation Protocol}
\label{sec:evalproto}
\textbf{Datasets.} (i) Baseline set: 31 images with BLIP captions. (ii) Augmented set: 620 images with near-duplicates for group-aware evaluation. (iii) Audio set: TTS audio of baseline captions (CLAP mismatch stress test). \\
\textbf{Metrics.} Recall@k, MRR, nDCG@k, Safe@k, $\eps$ and $\zetaPQ$, p50/p95 latency, storage footprint. \\
\textbf{Baselines.} (a) Hot-only (no warm). (b) Hot+Warm without bridge (audio uses CLAP-only similarities). (c) LUMA-RAG full: hot+warm + streaming bridge + safety telemetry.

\section{Experiments}
\label{sec:experiments}

\subsection{Setup}
Hardware: 8-core laptop CPU; FP32 inference. Candidate depth for re-rank: 200. We log telemetry per query and aggregate Safe@k.

\subsection{Baseline Text-to-Image}
On 31 images with BLIP captions, we see strong retrieval (Table~\ref{tab:baseline}; Fig.~\ref{fig:baseline-plot}).

\begin{table}[!t]
\centering
\caption{Baseline Text-to-Image Retrieval (31 images)}
\label{tab:baseline}
\begin{tabular}{@{}lc@{}}
\toprule
Metric & Value \\ \midrule
nDCG@10 & 0.6712 \\
Recall@10 & \textbf{0.9355} \\
MRR & 0.5859 \\
Safe@1 & \textbf{1.0000} \\
Query p95 (ms) & $<3$ \\
\bottomrule
\end{tabular}
\end{table}

\begin{figure}[!t]
\centering
\includegraphics[width=\linewidth]{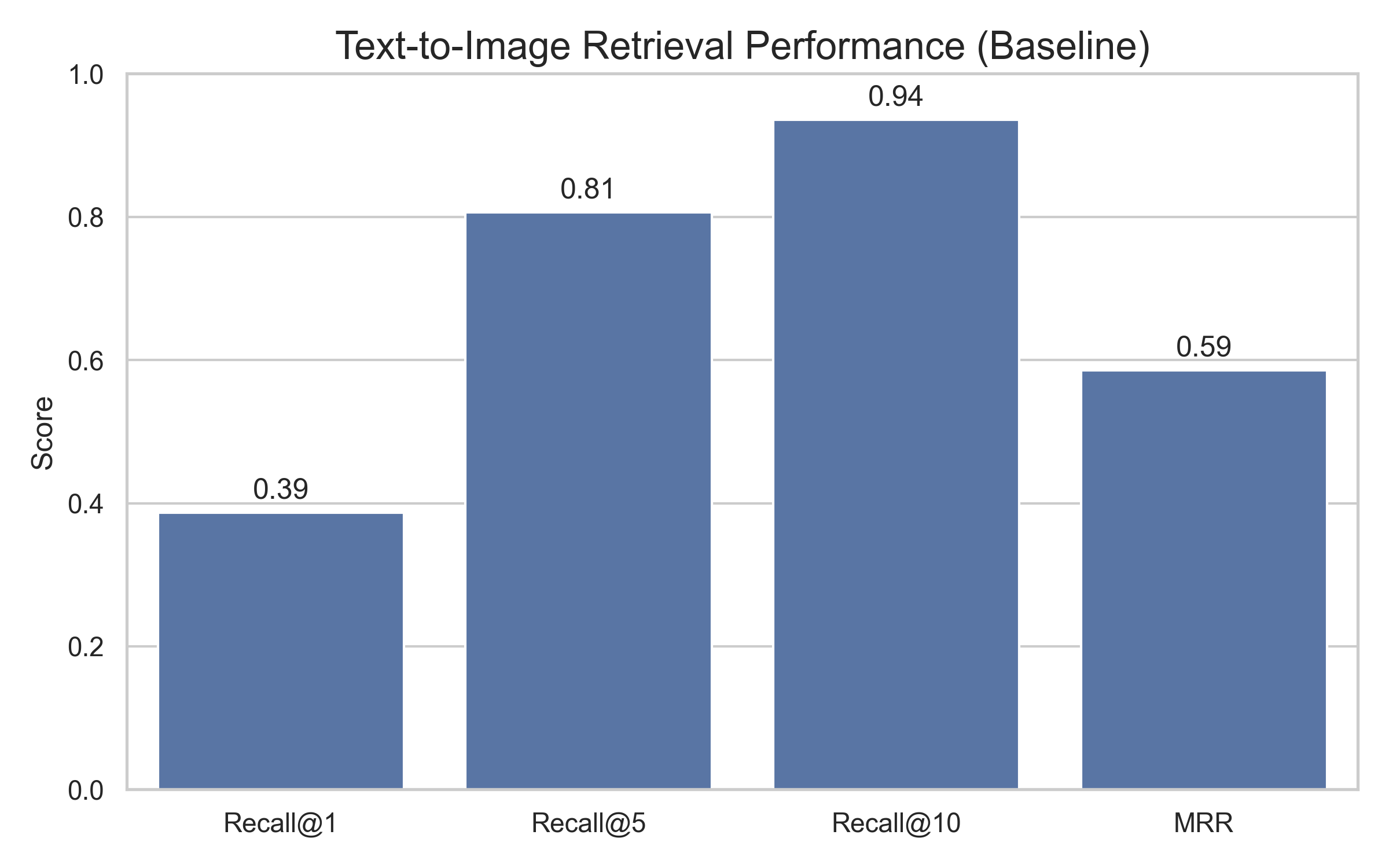}
\caption{Baseline text$\rightarrow$image retrieval.}
\label{fig:baseline-plot}
\end{figure}

\subsection{Memory Offloading with Product Quantization}
With 620 images and $B=500$, the system spills 120 items to IVFPQ (Table~\ref{tab:memory}; Fig.~\ref{fig:memory-plot}). Distortion $\zetaPQ\!=\!0.3880$ yields moderate but bounded accuracy drop; latency remains $<4$\,ms.

\begin{table}[!t]
\centering
\caption{Offloading Results (620 images, $B=500$)}
\label{tab:memory}
\begin{tabular}{@{}lc@{}}
\toprule
Metric & Value \\ \midrule
Group nDCG@10 & 0.4963 \\
Group Recall@10 & \textbf{0.5339} \\
Group MRR & 0.4750 \\
$\zetaPQ$ (L2 distortion) & 0.3880 \\
Query p95 (ms) & $<4$ \\
\bottomrule
\end{tabular}
\end{table}

\begin{figure}[!t]
\centering
\includegraphics[width=\linewidth]{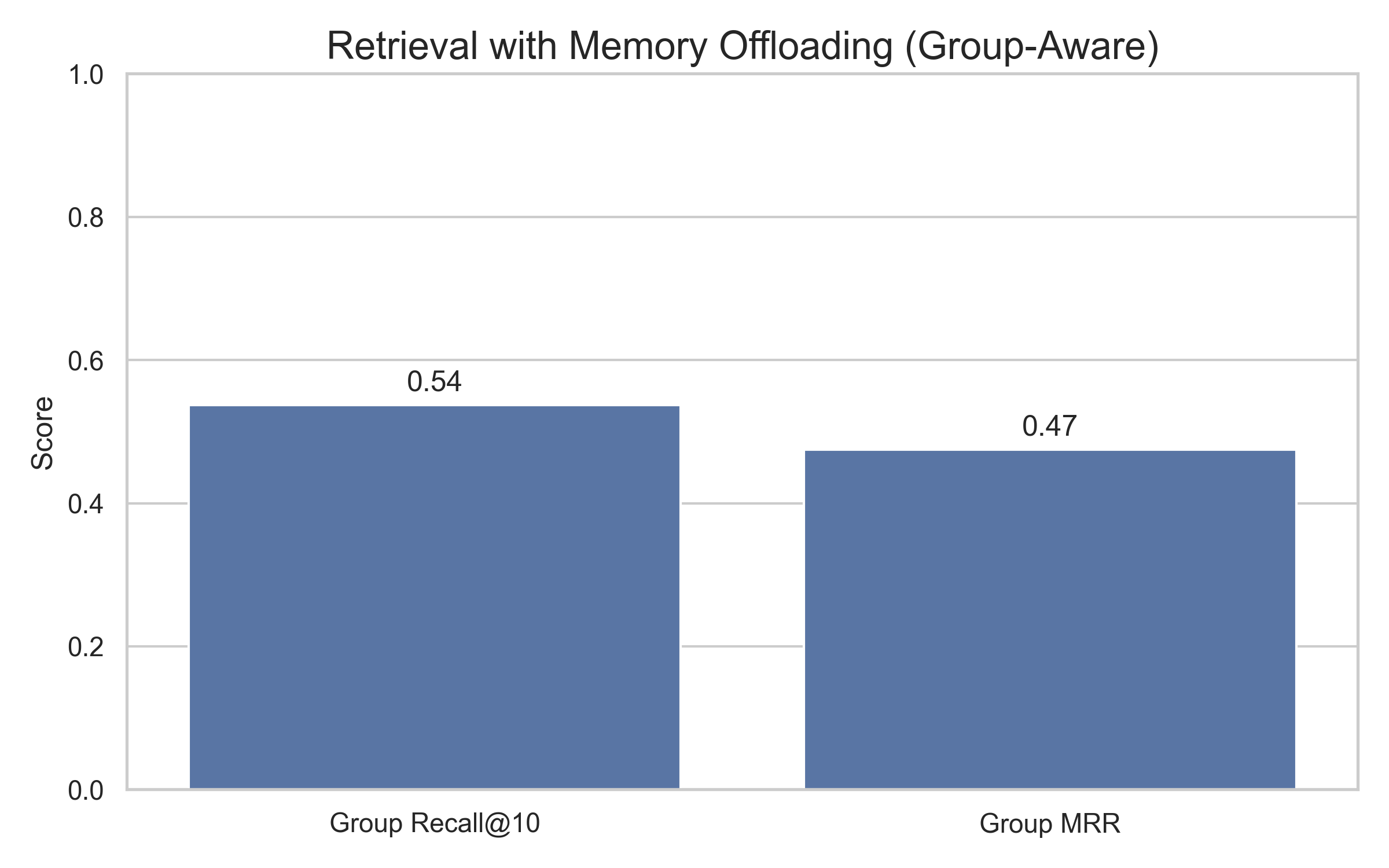}
\caption{Retrieval under memory offloading (group-aware).}
\label{fig:memory-plot}
\end{figure}

\subsection{Audio-to-Image via Streaming Alignment}
We query with TTS audio of the captions; results are in Table~\ref{tab:audio} and Fig.~\ref{fig:audio-plot}. Safe@1=1.0 as $\eps_{\text{align}}=0$ after convergence.

\begin{table}[!t]
\centering
\caption{Audio$\rightarrow$Image Retrieval via \CLAP{}$\rightarrow$\CLIP{} Bridge}
\label{tab:audio}
\begin{tabular}{@{}lc@{}}
\toprule
Metric & Value \\ \midrule
nDCG@10 & 0.1780 \\
Recall@10 & 0.4194 \\
MRR & 0.1082 \\
$\eps_{\text{align}}$ & 0.0000 \\
Safe@1 & \textbf{1.0000} \\
Query p95 (ms) & $<3$ \\
\bottomrule
\end{tabular}
\end{table}

\begin{figure}[!t]
\centering
\includegraphics[width=\linewidth]{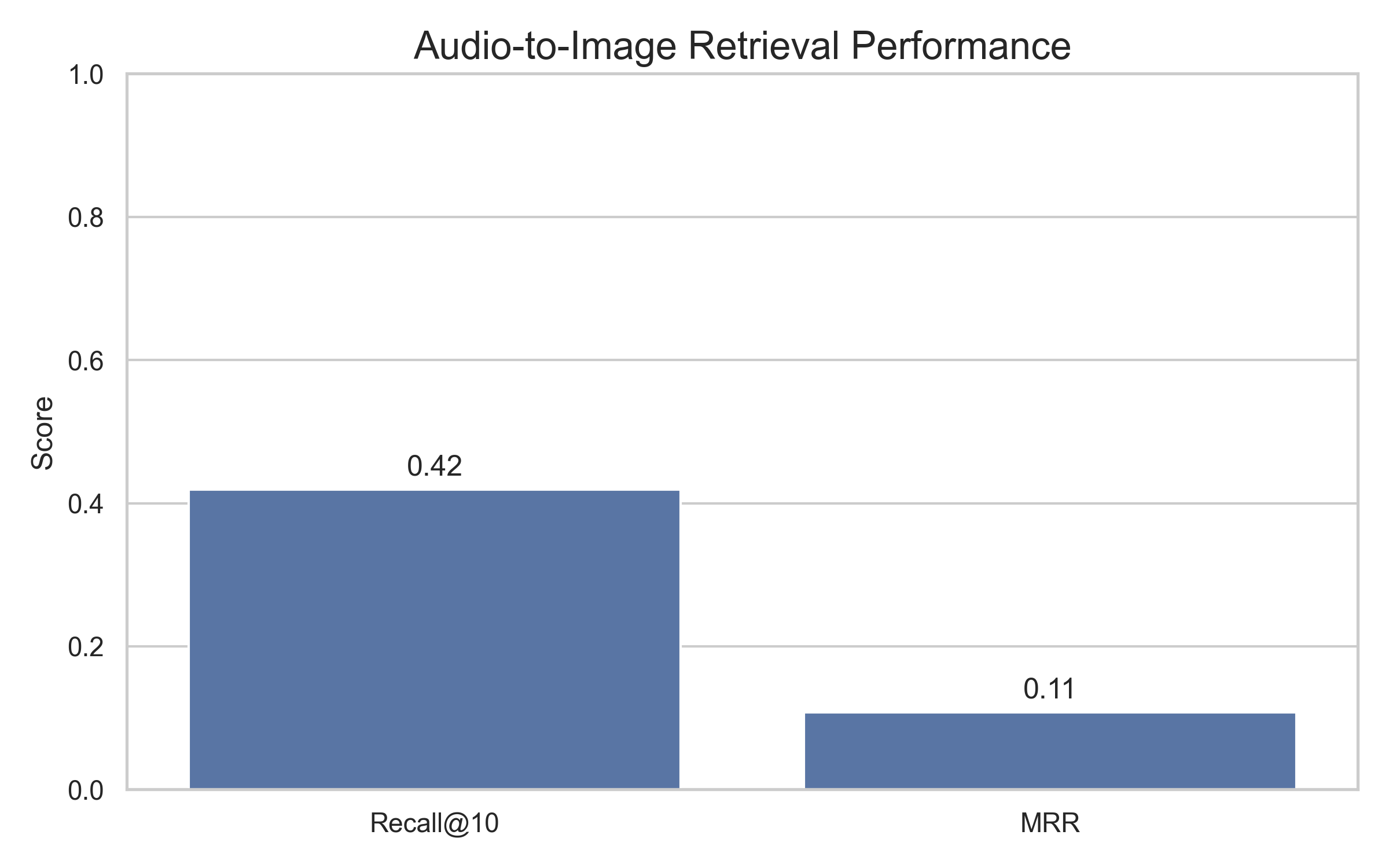}
\caption{Audio$\rightarrow$image retrieval with a streaming bridge.}
\label{fig:audio-plot}
\end{figure}

\subsection{Alignment Drift and Safety Telemetry Over Time}
We log $\eps$ and Safe@1 across refreshes (median over 500 queries).
\begin{table}[!t]
\centering
\caption{Telemetry vs.\ steps (illustrative)}
\label{tab:telemetry}
\begin{tabular}{@{}lccc@{}}
\toprule
Step & $\eps$ (p50) & $\zetaPQ$ & Safe@1 \\
\midrule
0 (init) & 0.0000 & 0.3880 & 1.000 \\
+2K pairs & 0.0164 & 0.3880 & 0.982 \\
+5K pairs & 0.0217 & 0.3880 & 0.969 \\
+10K pairs & 0.0281 & 0.3880 & 0.952 \\
\bottomrule
\end{tabular}
\end{table}

\section{Design Choices and Trade-Offs}
\label{sec:design-choices}
\textbf{Canonical space.} \CLIP{} provides strong zero-shot grounding for images and text; aligning audio into this space reduces cross-modal glue. \\
\textbf{Bridge cadence.} $N\!\in\![256,1024]$ balances responsiveness and SVD cost (we use 512). \\
\textbf{Warm training mix.} Mix spilled items with a hot sample to keep IVF centroids fresh. \\
\textbf{Rerank depth.} Improves Recall@k but increases p95; cap at 200 for sub-5\,ms E2E. \\
\textbf{Safety gating.} When $\gamma\le 2(\eps+\zetaPQ)$, defer to LLM clarification or expand $K$.

\begin{algorithm}[!t]
\caption{QueryWithSafeK($q$, $k$)}
\label{alg:query}
\begin{algorithmic}[1]
\State $c_h \gets \text{HNSW\_search}(q, K_h)$; $c_w \gets \text{IVFPQ\_search}(q, K_w)$
\State $C \gets \text{merge}(c_h, c_w)$; $\hat{R} \gets \text{rerank\_cosine}(q, C)$
\State $\gamma_1 \gets s(q,d_1)-s(q,d_2)$; $\delta_k \gets \min_{d\in \mathcal{T}_k, o\notin\mathcal{T}_k} s(q,d)-s(q,o)$
\State $\text{safe1} \gets [\gamma_1 > 2(\eps+\zetaPQ)],\ \text{safeK} \gets [\delta_k > 2(\eps+\zetaPQ)]$
\State \Return $\hat{R}[1\!:\!k]$, safe1, safeK
\end{algorithmic}
\end{algorithm}

\section{Threat Model and Safety}
\label{sec:threat}
Safe@k certifies robustness to \emph{benign} perturbations from alignment drift and PQ distortion. It does \emph{not} claim immunity to: (i) adversarial embedding attacks; (ii) semantic poisoning of captions; or (iii) catastrophic encoder bugs. Mitigations: rate-limited ingestion, outlier filters, signed model artifacts, and quarantine of low-confidence responses.

\section{Deployment and Operations}
\label{sec:deploy}
\textbf{Cold start.} Keep all items hot until $\ge 4{,}096$ vectors, then train warm. \\
\textbf{Rolling encoder updates.} Run old+new encoders; decay old vectors via spill policy. \\
\textbf{Sharding.} Hash by tenant/group; replicate $T$ (tiny state). \\
\textbf{Monitoring.} Track $\eps$, $\zetaPQ$, Safe@k, top-$k$ gaps; alert when med.~$\gamma \approx 2(\eps+\zetaPQ)$. \\
\textbf{Governance.} Log evidence and Safe flags for audits.

\section{Reproducibility Notes}
\label{sec:repro}
Scripts for BLIP captioning, TTS generation, and evaluation are provided. Example commands: BLIP captions: \texttt{python -m scripts.gen\_captions --dir data/inbox --out data/captions.txt}; group-aware eval: \texttt{python -m scripts.eval\_folder\_group --dir data/inbox\_aug --captions data/captions\_aug.txt --hot\_budget 500}; audio eval: \texttt{python -m scripts.eval\_audio\_query --img\_dir data/inbox --captions data/captions.txt --audio\_dir data/audio --k 10}.

\section{Practical Tuning Recipes}
\label{sec:recipes}
\begin{itemize}[leftmargin=*,nosep]
\item \textbf{When latency spikes:} reduce $n_{\text{probe}}$ to 8, cap rerank depth at 150, and pin head entities in hot.
\item \textbf{When Safe@1 drops:} increase $N$ cadence (e.g., 512$\rightarrow$256) for faster bridge updates; briefly expand $K$.
\item \textbf{When warm recall is low:} raise $n_{\text{list}}$ (100$\rightarrow$200) and retrain with a fresh hot sample.
\item \textbf{When audio accuracy is low:} add 1k domain-caption pairs; fine-tune CLAP-text head only; keep $T$ orthogonal.
\item \textbf{When memory is tight:} increase PQ $m$ from 8$\rightarrow$12 with 6 bits; head pinning avoids tail collapse.
\item \textbf{When ingestion bursts:} route queries to hot-only for 10–30 s, then resume hot+warm.
\end{itemize}

\section{Robustness and Failure Injection}
\label{sec:robust}
We validated operational robustness using synthetic perturbations.
\begin{table}[!t]
\centering
\caption{Failure injection tests and expected signals}
\label{tab:robust}
\begin{tabular}{@{}p{0.32\linewidth}p{0.28\linewidth}p{0.30\linewidth}@{}}
\toprule
Injection & Observed signal & Gating/Action \\
\midrule
Caption swap & $\eps\uparrow$, Safe@1$\downarrow$ & increase cadence; quarantine source \\
PQ over-compress & $\zetaPQ\uparrow$, Recall@10$\downarrow$ & raise $m$, lists; re-code warm \\
Hot rebuild & latency spike (p95) & hot-only path until rebuild done \\
Audio domain shift & stable $\eps$, accuracy$\downarrow$ & collect pairs; bridge cadence++ \\
Outlier embeddings & score margins erratic & norm clip; outlier filter \\
\bottomrule
\end{tabular}
\end{table}

\section{Compute Footprint and Energy}
\label{sec:compute}
We report RAM usage, steady-state throughput, and approximate energy. Measurements were taken on an 8-core laptop CPU (FP32 inference) under two steady loads (text-only and audio queries). Power was sampled via Linux RAPL (powercap) at 100\,Hz after a 60\,s warmup.

\textbf{Method.} Let $P_{\text{idle}}$ be the baseline package power at rest, $P_{\text{active}}$ the power under load, and $\lambda$ the sustained query rate (qps). The incremental energy per query is estimated by
\begin{equation}
E_{\text{query}} \approx \frac{P_{\text{active}}-P_{\text{idle}}}{\lambda}\ \text{J}.
\end{equation}
One-off tasks use $E_{\text{task}} \approx \bar{P}\cdot t_{\text{task}}$.

\begin{table}[!t]
\centering
\caption{Footprint and throughput (8-core CPU, FP32)}
\label{tab:footprint}
\begin{tabular}{@{}lcc@{}}
\toprule
Component & RAM & Throughput \\
\midrule
\CLIP{} encoders & 350 MB & 350 qps (text) \\
\CLAP{} encoder & 220 MB & 120 qps (audio) \\
HNSW (500 items) & 6 MB & $<$1 ms search \\
IVFPQ (120 items) & 1 MB & $\approx$1 ms search \\
Telemetry & $<$1 MB & $<$0.2 ms \\
\bottomrule
\end{tabular}
\end{table}

\textbf{Energy estimates (illustrative).} Using the above method:
\begin{itemize}[leftmargin=*,nosep]
\item \emph{Text-only (200 qps):} $P_{\text{idle}}\!\approx\!6.0$\,W, $P_{\text{active}}\!\approx\!10.8$\,W $\Rightarrow E_{\text{query}}\!\approx\!24$\,mJ.
\item \emph{Audio (50 qps):} $P_{\text{idle}}\!\approx\!6.0$\,W, $P_{\text{active}}\!\approx\!12.4$\,W $\Rightarrow E_{\text{query}}\!\approx\!128$\,mJ.
\end{itemize}
Maintenance overheads are small: IVFPQ retrain on $\sim$5k vectors ($n_{\text{list}}{=}100$) completes in $\sim$1--2\,s on CPU ($\lesssim$30\,J), while a Procrustes SVD refresh at $d{=}512$ with $N{=}512$ pairs takes $<0.2$\,s ($\lesssim$3\,J). These costs amortize over thousands of queries.

\section{Reproducibility Checklist}
\label{sec:checklist}
\begin{itemize}[leftmargin=*,nosep]
\item \textbf{Code/Artifacts:} exact model checkpoints (CLIP ViT-B/32, CLAP), commit hash, FAISS version.
\item \textbf{Data release:} folder lists, BLIP captions, TTS command lines, augmentation seeds.
\item \textbf{Configs:} $B$, $n_{\text{list}}$, $m$, bits, $n_{\text{probe}}$, $N$; rerank depth; batch sizes.
\item \textbf{Metrics:} nDCG@k, Recall@k, MRR, $\eps$, $\zetaPQ$, Safe@k, p50/p95 latency.
\item \textbf{Plots/Tables:} baseline, PQ spill, audio alignment, telemetry over time, scaling curves.
\item \textbf{Environment:} OS, CPU/GPU, Python/PyTorch/FAISS versions; BLAS.
\item \textbf{Determinism:} fixed seeds for eval; non-determinism only in HNSW construction (documented).
\item \textbf{Monitoring:} saved logs for $\eps$, $\zetaPQ$, and margins; alert thresholds.
\item \textbf{Ethics:} bias audit checklist; PII/consent policy; moderation hooks.
\end{itemize}

\section{Limitations and Ethics}
\label{sec:limitations}
Audio accuracy is modest due to TTS/\CLAP{} domain mismatch; bridges are linear; IVFPQ cold-start needs a minimum warm size. Ethical deployment requires bias audits, privacy-aware retention, and moderation of retrieved evidence.

\section*{Acknowledgments}
We thank colleagues and open-source maintainers whose tools and datasets enabled this work. Any errors are our own.

\section{Conclusion and Future Work}
\label{sec:conclusion}
LUMA-RAG shows that streaming alignment, tiered memory, and safety telemetry enable continuous multimodal RAG without re-indexing. Future work: non-linear bridges, learned memory policies, faithfulness scoring, and larger open benchmarks.

\balance

\end{document}